\newtheorem{theorem}{Theorem}[section]
\newtheorem{lemma}[theorem]{Lemma}
\xpatchcmd{\proof}{\topsep5\p@\@plus6\p@\relax}{}{}{}
\title{Graph Space Embedding}
\author{
Jo\~{a}o Pereira$^{1,2}$
\and
Albert K. Groen$^1$\and
 Erik S. G. Stroes$^1$\And
Evgeni Levin$^{1,2}$
\affiliations
$^1$Amsterdam University Medical Center, The Netherlands\\
$^2$Horaizon BV, The Netherlands
\emails
\{j.p.belopereira, e.stroes, a.k.groen, e.levin\}@amsterdamumc.nl
}
\begin{document}

\maketitle

\begin{abstract}
We propose the \textit{Graph Space Embedding} (GSE), a technique that maps the input into a space where interactions are implicitly encoded, with little computations required. We provide theoretical results on an optimal regime for the GSE, namely a feasibility region for its parameters, and demonstrate the experimental relevance of our findings.
Next, we introduce a strategy to gain insight on which interactions are responsible for the certain predictions, paving the way for a far more transparent model. In an empirical evaluation on a real-world clinical cohort containing patients with suspected coronary artery disease, the GSE achieves far better performance than traditional algorithms.
\end{abstract}

\section{Introduction}

Learning from interconnected systems can be a particularly difficult task due to the possibly non-linear interaction between the components \cite{interactions,physics_interactions}.
%	Adequate risk prediction is considered a cornerstone in the management of patients with suspected coronary artery disease (CAD). However, traditional risk stratification using generally available clinical risk factors, plasma lipid levels and other conventional biomarkers have only modest predictive value for the occurrence of events and the presence of coronary atherosclerosis\cite{risk-prediction, risk-prediction2}. Recently, technical advances have enabled the simultaneous measurement of large amounts of proteins using a single drop of blood, and several inflammatory plasma proteins have already been linked to the pathophysiology of atherosclerosis since\cite{pea, inflammation}. However, these variables form a network whose interactions alter their individual function in a non-linear fashion. 
In some cases, these interactions are known and therefore constitute an important source of prior information \cite{robots,ppi_prior}. Although prior knowledge can be leveraged in a variety of ways \cite{domain_knowledge}, most of the research involving interactions, is focused on their discovery. One popular approach to deal with feature interactions, is to cast the interaction network as a graph and then use kernel methods based on graph properties, such as walk-lengths or subgraphs ~\cite{short_path_kernel,graphlets,subgraph} or, more recently, graph deep convolutional methods ~\cite{deep_graph,graph_cnn,kipf}. In this work however, we focus on the case in which the interactions are feature specific and a universal property of the data instances, which make the pattern search algorithms not suitable for this task. To our knowledge, there is limited research involving this setting, although we suggest many problems can be formulated in the same way (see Figure ~\ref{fig:approach}).  %Such prior knowledge can be used to prepare the data, e.g. transforming the features prior to the analysis; initiate the hypothesis, e.g. using domain knowledge to initiate Bayesian networks' structure; changing the search objective; or even to restrict the learning space \cite{domain_knowledge}. 
 To address this knowledge gap, we present a novel method: \textit{Graph Space Embedding} (GSE), an approach related to the 'random-walk' graph kernel \cite{gartner,fast_rwk} with an important difference: it is not limited to the sum of all walks of a given length, but rather compares similar edges in two different graphs,  which results in better expressiveness. Our empirical evaluation demonstrates that GSE leads to an improvement in performance compared to other baseline algorithms when plasma protein measurements and their interactions are used to predict ischaemia in patients with Coronary Artery Disease (CAD) \cite{cad,cad4}. Moreover, the kernel can be computed in $\mathcal{O}(n^2)$, where $n$ is the number of features, and its hyperparameters efficiently optimized via maximization of the kernel matrix variation.
\subsection{Main Contributions}
\begin{enumerate}
\item \textit{Graph Space Embedding} function that efficiently maps input into an “interaction-based” space
\item Novel theoretical result on optimal regime for the GSE, namely feasibility region for its parameters
\item \textit{Even Decent Sampling Algorithm}: a strategy to gain insight on which interactions are responsible for the certain prediction
\end{enumerate}
\begin{figure}[tp!]\
	\centering
	\includegraphics[width=0.6\linewidth]{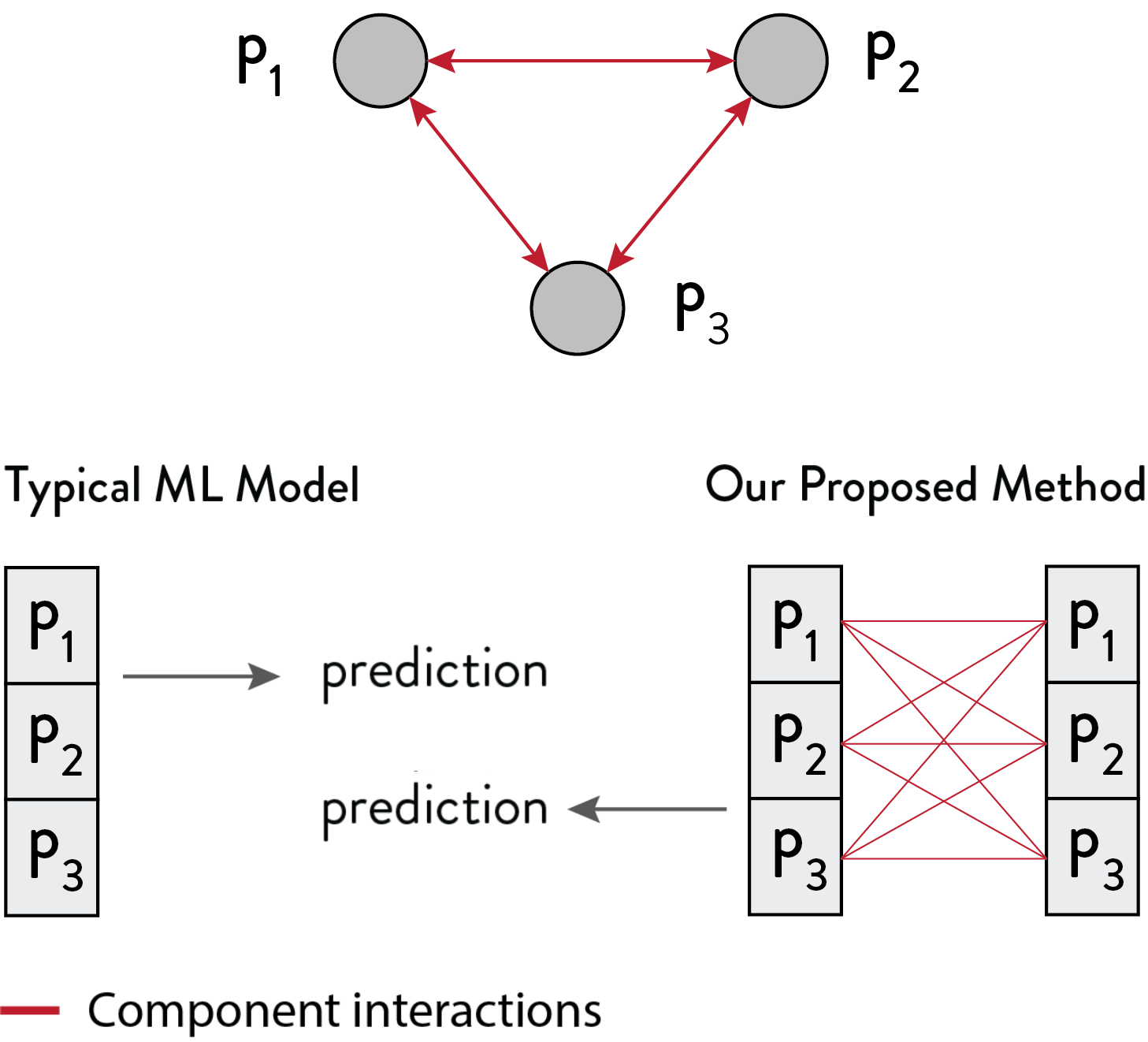}
	\caption[Standard learning algorithm vs PPI]{A traditional learning algorithm with no structural information will take the feature values and learn to produce a prediction with complete disregard for their interactions (top graph).}
	\label{fig:approach}
\end{figure}

\section{Approach}
A remark on notation: we will use bold capital letters for matrices, bold letters for arrays and lower case letters for scalars/functions/1-d variables  (ex. $\bold{X}, \bold{x}, x$).
\subsection{Interaction Graphs}

Any network can be represented by a graph $\mathcal{G}=\{V,E\}$, where $E$ is a set of edges, $V$ a set of vertices. Denote by $\bold{A}_{|V| \times |V|}$ ($|V|$ is equal to the number of features $N$) the adjacency matrix, where $\bold{A}_{i,j}$ represents the interaction between feature $i$ and $j$, and whose value is $0$ if there is no interaction. \par
%For simplicity, we will assume that each variable affects each other reciprocally, which amounts to setting $\bold{A}_{i,j} = \bold{A}_{j,i}$, although this is not a hard constraint. \\
Let $\bold{x}_{_{1\times N}}$ be an array with measurements of features $1$ to $N$ for a given point in the data. In order to construct an instance-specific matrix, one can weigh the interaction between each pair of features with a function of their values' product:
\begin{equation}\label{eq:G}
\bold{G}_{\bold{x}}(\bold{A}) = \varphi (\bold{A}) \circ \bold{x}^{\top} \bold{x},
\end{equation}
where $\varphi(\bold{A})$ is some function of the network interaction matrix $\bold{A}$, and the operator $\circ$ represents the Hadamard product, i.e. $(\bold{A} \circ \bold{B} )_{i,j} = (\bold{A})_{i,j} (\bold{B} )_{i,j}$.

\subsection{Graph Kernel}

Unlike the distance in euclidean geometry, which intuitively represents the length of a line between two points, there is no such tangible metric for graphs. Instead, one has to decide what is a reasonable evaluation for the difference between two graphs in the context of the problem. \par A popular approach \cite{gartner} is to compare random walks on both graphs.
The $i,j$th entry of the order $k$ power of an adjacency matrix $\bold{A}_{_{ |V|\times |V|}}$: $\bold{A}^k = \underbrace{\bold{A}\bold{A}...\bold{A}}_{k\,times}$, corresponds to the number of walks of length $k$ from $i$ to $j$. Any function that maps the data into a feature space $\mathcal{H}$: $\phi: X \rightarrow \mathcal{H}$, $k(\bold{x},\bold{y}) = <\phi(\bold{x}), \phi(\bold{y})>$ is a kernel function. Using the original graph kernel formulation, it is possible to define a kernel that will implicitly map the data into a space where the interactions are incorporated:
\begin{equation}\label{original_gartner}
k_{n}(\bold{G}, \bold{G}^{'}) = \sum^{n}_{i,j=1}[ \gamma]_{i,j} \left \langle  [\bold{G}]^{i} ,  [\bold{G}']^{j} \right \rangle_{F} ,
\end{equation}
where $\bold{G}$  and $\bold{G}'$ correspond to $\bold{G}_\bold{x}(\bold{A})$ and $\bold{G}_{\bold{x}'}(\bold{A})$ (see eq.~\ref{eq:G}); $\gamma_{i,j}$ is a function that "controls" the mapping $\phi(\cdot)$; and $n$ is the maximum allowed "random walks" length. If $\gamma$ is decomposed into $\bold{U}\bold{\Lambda} \bold{U}^T$, where $\bold{U}$ is a matrix whose columns are the eigenvectors of $\gamma$, and $\bold{\Lambda}$ a diagonal matrix with its eigenvalues at each diagonal entry, then equation ~\ref{original_gartner} can be re-factored into:
\begin{equation}
k_{n}(\bold{G}, \bold{G}^{'}) = \sum_{k,l=1}^{|V|} \sum^{n}_{i=1} \phi_{i,k,l}(\bold{G})\phi_{i,k,l}(\bold{G}'),
\end{equation}
where $\phi_{i,k,l}(\bold{G}) = \sum_{j=1}^n [\sqrt{\bold{\Lambda}} \bold{U}^T]_{i,j} \bold{G}^j$. Consequently, different forms of the function $\gamma$ can be chosen, with different interpretations. For the case where $\gamma_{i,j} = \theta^i\theta^j$, which yields:
\begin{equation}
\begin{aligned}
k_n(\bold{G},\bold{G}') = \langle  \sum_{i=1}^n\theta^i [\bold{G}]^i ,  \sum_{j=1}^n\theta^j [\bold{G}']^j  \rangle_F
\\ = \langle  \sum_{i=1}^n\theta^i [\bold{G}]^i ,  \sum_{i=1}^n\theta^i [\bold{G}']^i  \rangle_F,
\end{aligned}
\end{equation}
the kernel entry can be interpreted as an inner product in a space where there is a feature for every node pair \{$ k, l$\}, which represents the weighted sum of paths of length $1$ to $n$ from $k$ to $l$ $(\phi_{k,l} = \sum_{i=1}^n \theta^i \bold{G}_{k,l}^i)$ \cite{Evgeni}. The kernel can then be used with a method that employs the kernel trick, such as support vector machines,
%\cite{vapnik}
 kernel PCA
% \cite{kpca,kpca2}
 or kernel clustering.
%  \cite{kknn,kclustering}.
Another interesting case is when we consider the weighted sum of paths of length $1$ to $\infty$. This can be calculated using:
\begin{equation}\label{eq:exp_ip}
k_{\infty}(\bold{G}, \bold{G}^{'}) =\langle  e ^ {\beta \bold{G}} ,  e ^ {\beta \bold{G}'}  \rangle_F,
\end{equation}
since $e ^ {\beta \bold{G}} = \lim_{ n \to +\infty} \sum_{i=0}^n \frac{\beta^i}{i!} \bold{G}^i $, where $\beta$ is a parameter.

\subsection{Graph Space Embedding}

Since we are dealing with a universal interaction matrix for every data point and the interactions are feature specific, it makes sense to compare the same set of edges for every pair of points. As a consequence, we can also avoid solving time-consuming graph structure problems. With these two points in mind, we combined the previous graph kernel methods and the radial basis function (RBF) %refer to smola book
to develop a new kernel which we will henceforth refer to as Graph Space Embedding (GSE).
The radial basis function is defined as:
\begin{equation}\label{rbf}
k(\bold{x}, \bold{y}) = e^{-\frac{||\bold{x}-\bold{y}||^2}{\sigma ^2}} = c \, e ^{ \frac{2<\bold{x}, \bold{y}>}{\sigma^2}},
\end{equation}
where $c=e ^{ -\frac{||\bold{x}||^2}{\sigma^2}}e ^{ -\frac{||\bold{y}||^2}{\sigma^2}}$ . 
The GSE uses the distance $\left \langle  \sqrt{\gamma}[\bold{G}],  \sqrt{\gamma}[\bold{G}'] \right \rangle_{F} $ in the radial basis function:
\begin{equation}\label{eq:rbf_w_g}
\resizebox{.91\linewidth}{!}{$k(\bold{G}, \bold{G}')= c \, e ^{ \frac{2<\bold{x}, \bold{y}>}{\sigma^2}} = 
c \, \underbrace{\sum_{n=0}^{\infty}  \frac{ \left ( 2\left<\sqrt{\gamma} \, \bold{G}, \sqrt{\gamma} \, \bold{G}'\right>_{F} \right )^n}{\sigma^{2n}\,n!}}_{r\_w}$}
\end{equation}
\par If we then take the upper term of the fraction in $r\_w$ to be $\left[2\sum_{i = 0 }^{|E|}\gamma \, \bold{G}_{i}\bold{G}'_{i}\right]^n$, we can use the multinomial theorem to expand each term of the exponential power series, and the expression for the kernel then becomes:

\begin{equation}\label{final_rbf_ppi}
k(\bold{G}, \bold{G}^{'})=c \, \sum_{n=0}^{\infty}\underbrace{ \left(\frac{2}{\nu}\right) ^n}_{\lambda} \underbrace{\sum_{\boldsymbol{\alpha}^n(\cdot)}
	\frac{\prod_{i=1}^{|E|}[\,\bold{G}_i \bold{G}'_i]^{\alpha_i}}{\prod_{i=1}^{|E|}\Gamma(\alpha_i + 1)}}_{r\_e} ,
\end{equation}
where $\Gamma$ is the gamma function, $\bold{G}_i\in E$ is the value of edge i in $\bold{G}$ and $\nu = \frac{\sigma^2}{\gamma}$. Here, $\boldsymbol{\alpha}^n(\cdot)$ represents a combination of $|E|$ integers: $(\alpha_1, \alpha_2, ... , \alpha_{|E|})$, with $\sum_i^{|E|} \boldsymbol{\alpha}_i^n(\cdot) = n$, and the sum in $r\_e$ is taken over all possible combinations of $\boldsymbol{\alpha}^n(\cdot)$.  For instance, for $n=3$ in a graph with $|E|=5$, possible examples of $\boldsymbol{\alpha}^3(\cdot)$ include $(0,1,1,1,0)$ or $(0, 2, 1, 0, 0)$ (see Figure ~\ref{RPSE_graph}).
\begin{figure}[htbp!]
	\vskip 0.1in
	\begin{center}
		\centerline{\includegraphics[width=1\columnwidth]{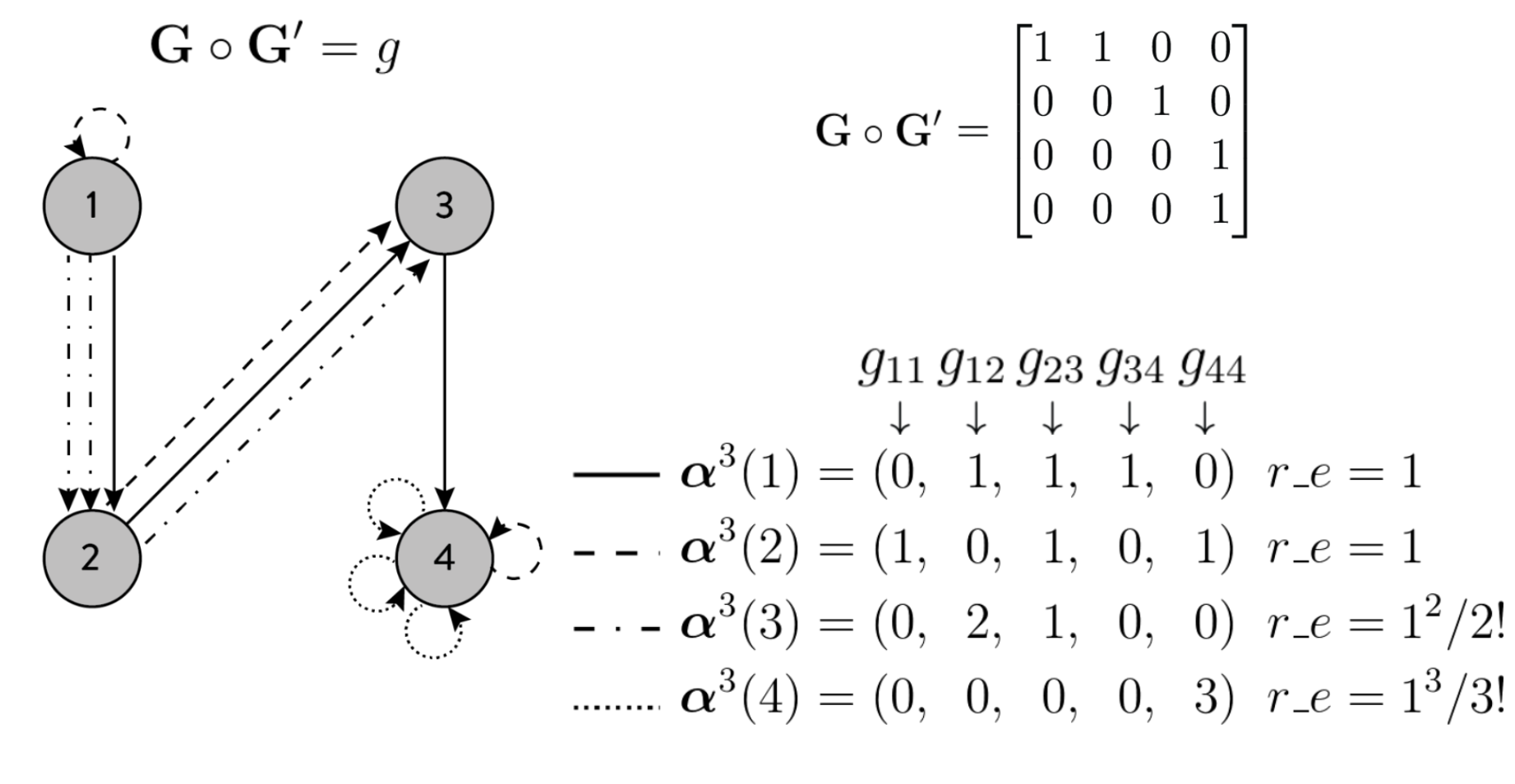}}
		\caption[Example of four walks of size 3 in an already multiplied graph]{The GSE kernel implicitly compares all edge combinations between $\bold{G}$ and $\bold{G}'$. In this hypothetical graph, we show a sample of four $\alpha$ combinations for $n=3$. We denote by $r\_e(\alpha(i))$ the value inside the sum $r\_e$ (see eq.~\ref{final_rbf_ppi}) corresponding to the combination $\alpha(i)$. Note that while $\alpha(1)$ is a graph walk and $\alpha(2)$ is not, $r\_e(\alpha(1))=r\_e(\alpha(2))$. However, due to the repetitions in $\alpha(3)$ and $\alpha(4)$, their value is shrunk in relation to the others. The higher the number of repetitions, the more the value shrinks.}
		\label{RPSE_graph}
	\end{center}
	\vskip -0.1in
\end{figure}
We begin by noting that since the sum in $r\_e$ is taken over all combinations  $(l,k) \in V\times V$ of size $n$, the GSE then represents a mapping from the input space to a space where all combinations of $n=0\rightarrow \infty$ edges are compared between $\bold{G}$ and $\bold{G}^{'}$, walks or otherwise (see fig \ref{RPSE_graph}).
Notice that this is in contrast with the kernel of equation \ref{eq:exp_ip}, where the comparison is between a sum of all possible walks of length $n=0\rightarrow \infty$ from one node to another in the two graphs. \par
The GSE also allows repeated edges. %commnet saying that is not walks that are compared but actually only the number of walks from one place to another, so little direct content is compareed.  confirm by expanding the kernel terms
However, if the data is normalized so that $\mu(\bold{G}_i)\simeq0, \sigma(\bold{G}_i)\simeq1$, then both the power in the numerator and the denominator of $r\_e$ will effectively dampen most combinations with repeated edges, with a higher dampening factor for higher number of repetitions and/or combinations. Even for outlier values, the gamma function will quickly dominate the numerator of $r\_e$.  The $\lambda$ factor serves the purpose of shrinking the combinations with higher number of edges for $\nu > 2$. Finally, $\sigma^2$ now serves a dual purpose: the usual one in RBF to control the influence of points in relation to their distance (see equation \ref{rbf}), while at the same time controlling how much combinations of increasing order are penalized.

\subsection{$\nu$ Feasibility Region}\label{var_region}

As discussed in the above section, the hyperparameter $\nu$ controls the shrinking of the contribution of higher order edge combinations. Intuitively, not all values of $\nu$ will yield a proper kernel matrix since too large of a value will leave out too many edge combinations while one too small will saturate the kernel values. This motivates the search for a $\nu$ value feasible operation region, where the kernel incorporates the necessary information for separability. 
Informally speaking, the kernel entry $k(\bold{G}, \bold{G}')$ measures the similarity of $\bold{G}$ and $\bold{G}'$. In case too few/many edge combinations are considered, the variation of the kernel values will be equal to $1$. Therefore, we use the variation of the kernel matrix $\sigma^2(\bold{K})$ as a proxy to detect if $\nu$ is within acceptable bounds. We shall refer to the ability of the kernel to map the points in the data into separable images $\phi(\bold{x})$ as kernel expressiveness.\par
To determine this region analytically, we find the $\nu_{max}$ that yields the largest kernel variation, and then use the loss function around this value to determine in which direction the value $\nu$ should take for minimal loss.
\begin{lemma}
$\max_{\nu}\,\,\sigma^2 \left(\bold{K}(\nu)\right)$ can be numerically estimated and is guaranteed to converge with a learning rate $\alpha \leq \frac{D}{2 (D-1)  d_{max}}$, where $D$ is the total number of inter graph combinations and $d_{max}$ is the largest combination distance.
\end{lemma}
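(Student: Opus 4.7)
The plan is to interpret the statement as a convergence result for projected/standard gradient ascent on $f(\nu) := \sigma^2(\mathbf{K}(\nu))$ and to derive the stated step size as one over a Lipschitz constant of $f'$. First, I would make the variance explicit in terms of the $D$ inter-graph kernel entries by writing
\begin{equation*}
f(\nu) \;=\; \frac{1}{2D^2}\sum_{p,q=1}^{D}\bigl(K_p(\nu) - K_q(\nu)\bigr)^{2},
\end{equation*}
using the identity between the empirical variance and the mean pairwise squared difference. Each $K_p(\nu)$ is a smooth function of $\nu$ through the power series in equation~\ref{final_rbf_ppi}, so $f$ is smooth and gradient ascent $\nu_{t+1} = \nu_t + \alpha f'(\nu_t)$ is well-defined.

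Next, I would invoke the standard descent lemma: if $|f''(\nu)| \le L$ on the region of interest, then gradient ascent with $\alpha \le 1/L$ is guaranteed to produce a monotonically non-decreasing sequence of objective values bounded above (the variance is obviously bounded), hence convergent to a critical point. The task therefore reduces to showing
\begin{equation*}
L \;\le\; \frac{2(D-1)\,d_{\max}}{D}.
\end{equation*}
Differentiating $f$ twice gives
\begin{equation*}
f''(\nu) = \frac{1}{D^2}\sum_{p,q}\!\Bigl[(K_p' - K_q')^{2} + (K_p - K_q)(K_p'' - K_q'')\Bigr],
\end{equation*}
which I would bound term by term by using $(K_p - K_q)^2 \le d_{\max}$ (the largest squared combination distance) together with the observation that the series defining $K_p(\nu)$ has derivatives dominated by the series itself, so $|K_p''|$ and $|K_p'|^2$ can be controlled by the same quantity that controls the pairwise distance. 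Collecting the $D^2$ pairs and using that only $D(D-1)$ of them are distinct collapses the prefactor to $(D-1)/D$, producing the stated constant.

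The main obstacle will be the derivative bound for the kernel series: one has to show that after differentiating the multinomial expansion in equation~\ref{final_rbf_ppi} twice with respect to $\nu$, the resulting series can still be bounded in terms of $d_{\max}$ rather than in terms of the raw edge magnitudes. I expect this to follow from the normalization assumption $\mu(\mathbf{G}_i)\simeq 0$, $\sigma(\mathbf{G}_i)\simeq 1$ discussed above the lemma, which guarantees absolute convergence of the series uniformly in $\nu$ over the feasibility region, and from the fact that term-by-term differentiation introduces only polynomial factors in $n$ dominated by the $\Gamma(\alpha_i+1)$ denominators. Once the Lipschitz bound on $f'$ is established, the claimed learning rate $\alpha \le D/(2(D-1)d_{\max})$ follows directly from the classical gradient-ascent convergence theorem for $L$-smooth objectives.
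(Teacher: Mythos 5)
Your overall strategy---bound the smoothness of $\sigma^2(\mathbf{K}(\nu))$ and read off the admissible step size as the reciprocal of that bound---matches the paper's intent, and your pairwise-difference rewriting of the variance is an acceptable substitute for the paper's $E[\mathbf{K}(\nu)^2]-E[\mathbf{K}(\nu)]^2$ expansion. The gap is in how you plan to produce the constant. The paper never touches the multinomial series of equation~\ref{final_rbf_ppi} in this proof: it collapses each kernel entry to the closed form $e^{-\nu d}$ with $d=\|\mathbf{G}-\mathbf{G}'\|^2$, so that differentiating in $\nu$ simply multiplies each exponential by $-d$, and $d_{\max}$ enters precisely as the bound on that pulled-down factor, combined with $0\le e^{-\beta}\le 1$. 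What you identify as ``the main obstacle''---term-by-term differentiation of the multinomial expansion, rescued by the normalization assumption---is a self-inflicted detour: the series sums exactly to the RBF exponential, and without using that closed form you have no route to the specific constant $\tfrac{2(D-1)}{D}d_{\max}$.

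Second, your key inequality $(K_p-K_q)^2\le d_{\max}$ is not justified and is structurally off: $d_{\max}$ is the largest squared graph distance, whereas kernel values satisfy $0\le K_p\le 1$, so $(K_p-K_q)^2\le 1$ and the two quantities are not comparable; the factor $d_{\max}$ can only arise from derivatives of the exponential. Moreover, if you carry out your second-derivative bound honestly with $K_p=e^{-\nu d_p}$, the term $(K_p'-K_q')^2$ is of order $d_{\max}^2$ and $(K_p-K_q)(K_p''-K_q'')$ likewise, so your route yields a smoothness constant scaling as $d_{\max}^2$ rather than $d_{\max}$, and you would not recover the stated learning rate. (For what it is worth, the paper's own passage from the numerator bound $\|\top-\Lambda\|\le 2\tfrac{D-1}{D}d_{\max}$ to a genuine Lipschitz constant for the derivative is itself informal, but the single power of $d_{\max}$ in the claim is only reachable through that first-derivative-of-the-exponential argument, which your proposal does not contain.)
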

\begin{proof}
The analytical expression for the variance is:
\begin{eqnarray}
\begin{aligned}
&\sigma^2 \left (\text{\footnotesize $\bold{K}(\nu)$} \right ) = E[\bold{K}(\nu) ^2] - \underbrace{E[\bold{K}(\nu)]^2}_{b}= \\ 
& \left( \frac{D-1}{D}\right) \sum_{d=1}^D e^{-2\nu d} -  \frac{1}{D^2} \sum_{i\neq j}^{D^2-D} 2e^{-\nu (d_i + d_j)}\,\,,
\end{aligned}
\end{eqnarray}
where we used the binomial theorem to expand $b$, and $d=||\bold{G}-\bold{G}'||^2$.
To guarantee the convergence of numerical methods the function derivative must be Lipschitz continuous:
\begin{equation}\label{eq: lip}
\frac{\| \bold{K}'(\nu) - \bold{K}'(\nu ')\|}{\|\nu - \nu '\|} \leq L(\bold{K}')\, : \forall \, \nu, \nu',
\end{equation}
by overloading the notation: $\bold{K}'(\nu) = \frac{\partial \sigma ^2 \left (\bold{K}(\nu)\right )}{\partial \nu}$ to simplify the expression. 
The left side of equation \ref{eq: lip} becomes:
\begin{equation}
\begin{split}
&\frac{\| \top - \Lambda \| }{\|\nu - \nu'\|}, \\
&\Lambda = 2\left ( \frac{D-1}{D^2} \right ) \left[ \sum_{d=1}^D d(e^{-2\nu d} - e^{-2\nu' d}) \right],\\
&\top = \frac{2}{D^2}\sum_{i\neq j}^{D^2-D} (d_i + d_j) \left (e^{-\nu (d_i + d_j)} - e^{-\nu' (d_i + d_j)}\right).
\end{split}
\end{equation}
Since $0\leq e^{-\beta} \leq 1 \, : \forall \, \beta \in  \mathbb{R}$, then:
\begin{equation}
\| \top - \Lambda \| \leq 2 \left ( \frac{D-1}{D} \right ) d_{max} .
\end{equation}
%\begin{eqnarray}
%|\Lambda | \leq 2 \left ( \frac{D-1}{D} \right ) d_{max}, \\
%|\top | \leq 4 \left( \frac{
%	D-1}{D} \right ) d_{max}, \\
%\| \top - \Lambda \| \leq 2 \left ( \frac{D-1}{D} \right ) d_{max} 
%\end{eqnarray}
 When $\epsilon=\nu - \nu'\rightarrow 0$ :
\begin{equation}
e^{-c\nu}-e^{-c\nu'} = \underbrace{\frac{e^{c\nu'}-e^{c\nu}}{e^{c(\nu + \nu')}}}_{\delta}\rightarrow 0, \, :  \, \nu, \nu' > 0,
\end{equation}
and $\delta$ tends much faster to 0 then $\epsilon$, since the denominator of $\delta$ is the exponential of the sum of $\nu$ and $\nu'$.
Thus, the function $k'(\nu)$ is Lipschitz continuous with constant equal to: $L(\bold{K}'(\nu))=2 \left ( \frac{D-1}{D} \right )d_{max}$ .
\end{proof}
\par We shall later demonstrate empirically that $\nu^* = \max_{\nu}\,\, \sigma^2(\bold{K}(\nu))$ improves the class separability for our dataset.

\subsection{Comparison with Standard Graph Kernels}\label{GK_theorectical_comparison}

The original formulation of the graph kernel by Gartner et. al (see eq. ~\ref{original_gartner}), multiplies sums of random walks of length $i$ from one edge to another ($k\rightarrow l$) by sums of random walks $k\rightarrow l$ from the other graph being compared of a length not necessarily equal to $i$:
\begin{equation}
\begin{split}
k_{n}(\bold{G}, \bold{G}^{'}) = \sum^{n}_{i,j=1}[ \gamma]_{i,j} \left \langle  [\bold{G}]^{i}_{kl} ,  [\bold{G}']^{j}_{kl} \right \rangle_{F} \\
=\sum_{k,l=1}^{|V|} \sum_{i=1}^{n} [\bold{G}]^i_{kl} \sum_{j=1}^{n}[\gamma]_{i,j}[\bold{G}']^j_{kl}
\end{split}.
\end{equation}
\par The infinite length random walk formulation (see eq. ~\ref{eq:exp_ip}) behaves in a similar way.
Our method though, always compares the same set of edges in the two graphs. \par
Another important difference is the complexity of our method versus the random-walk graph kernel. For an $m\times m$ kernel and $n \times n$ graph, the worst-case complexity for a length $k'$ random walk kernel is $\mathcal{O}(m^2k'n^4)$ and $\mathcal{O}(m^2k'n^2)$ for dense and sparse graphs, respectively \cite{gk_comp}. The GSE, on the other hand, is always $\mathcal{O}\left( m^2n^2\right )$ since the heaviest operation is the Frobenius inner product in order to compute the distance between $\bold{G}$ and $\bold{G}'$. Moreover, once this distance is computed, evaluating the kernel for different values of $\nu$ is $\mathcal{O}(1)$, which combined with the fact that the variance of this kernel is Lipschitz continuous, allows for efficient searching of optimal hyperparameters (see section~\ref{var_region}).

\subsection{Interpretability}\label{LIME}

How could we better understand what the GSE is doing, when it maps points into an infinite-dimensional space? A successful recent development in explaining black-box models is that of Local Interpretable Model-agnostic Explanations
(LIME) \cite{LIME}, where a model is interpreted locally by making slight perturbations in the input and building an interpretable model around the new predictions. We too shall monitor our model's response to changes in the input, but instead of making random perturbations, we will perturb the input in the direction of maximum output change. \par
%It is therefore a Post-hoc Interpretability method \cite{interp}. \\
Given an instance from the dataset $\bold{x}_{1\times N}$, where $N$ is the number of features, and the function that will incorporate the feature connection network $\bold{G}_\bold{x}(\bold{A})$ (e.g. $\bold{G}_\bold{x}(\bold{A}) = \bold{A} \circ \bold{x}^{\top}  \bold{x}$), we will find the direction to which the model is the most sensitive (positive and negative). Unlike optimization, where the goal is to converge as fast as possible, here we are interested in the intermediate steps of the descent. This is because we shall use the set $\mathcal{\bold{G}} = \{\bold{G}_{\bold{x}_1}, \, \bold{G}_{\bold{x}_2}, \,..., \,\bold{G}_{\bold{x}_M}\} $ and the black-box model's predictions $\bold{f} = \{f(\bold{x}_1), \, f(\bold{x}_2), \, ... ,\, f(\bold{x}_M)\}$ to fit our interpretable model $h(\bold{G})\in\mathcal{H}$ (where $\bold{x}_i$ is a variation of the original sample $\bold{x}_0$, and $\mathcal{H}$ represents the space of all possible interpretable functions $h$). This way, we will indirectly unveil the interactions that our model is most sensitive to, and show how these impact the predictions.  
To penalize complex models over simpler ones, we will introduce a function $\Omega(h)$ that measures model complexity. To scale the model complexity term appropriately, we can find a scalar $\theta$ so that the expected value of $\Omega(h)$ is equal to a fraction $\varepsilon$ of the expected value of the loss:
\begin{eqnarray}\label{scaling_complexity_factor}
\mathbf{E}[\theta \Omega(h)] = \varepsilon \mathbf{E}[\mathcal{L}] \leftrightarrow \theta = \frac{\varepsilon \mathbf{E}[\mathcal{L}]}{\mathbf{E}[ \Omega(h)]}
.
\end{eqnarray}
\par Lastly, for highly non-linear models, the larger the input space the more complex the output explanations are likely to be, so we will weigh the sample deviations the same as the original sample $\bold{x}_0$ using the model's own similarity measure $k(\bold{G}_{\bold{x}_i}, \bold{G}_{\bold{x}_0})$. Putting it all together:
\begin{equation}
\xi(\bold{x}_0) = \min_{h\in \mathcal{H}} \mathcal{L}\Big (h, f, k(\bold{G}_{\bold{x}_i}, \bold{G}_{\bold{x}_0})\Big ) + \theta\Omega(h). 
\end{equation}
where $\mathcal{L}\Big (h, f, k(\bold{G}_{\bold{x}_i}, \bold{G}_{\bold{x}_0})\Big )$ is the loss of $h$ when using $\bold{G}_{\bold{x}_i}$ to predict the black-box model output $f(\bold{x}_i)$, weighted by the kernel distance to the original sample $k(\bold{G}_{\bold{x}_i}, \bold{G}_{\bold{x}_0})$.

\subsubsection{Even Descent Sampling Method}
\par In order to adequately cover the most sensitive regions, we need to take steps with equidistant output values. Thus, we developed a novel adaptive method to sample more in steeper regions and less in flatter ones. The intuition is that we would like to approximate the function values in unexplored regions, so that we choose an appropriate sampling step while considering the uncertainty of the approximation. Due to the stochastic nature of the method, it is able to escape local extremes. Consider the value of function $f$ at a point $\bold{x}_0$ and its first order Taylor approximation at an arbitrary point $\bold{x}$:
\begin{equation}
f(\bold{x}) \approx \hat f(\bold{x}) = f(\bold{x}_0) + \nabla_\bold{x} f(\bold{x}_0)(\bold{x}-\bold{x}_0).
\end{equation}
\par The larger the difference $\delta = \bold{x}-\bold{x}_0$, the less likely it is that the approximation error $f(\bold{x})- \hat f(\bold{x})$ is small. Assume we would like to model the random variable $F$, which takes the value of $1$ if the approximation error is small ($\delta=|\hat f(\bold{x}) - f(\bold{x})| \approx 0$), and 0 otherwise.
We will model the probability density function of $F$ as being:
\begin{equation}
p_F(f=1|\delta) \,=\, \lambda e^{-\lambda \delta}.
\end{equation}
\par Consider also the random variable $T$ which takes the value of $1$ if the absolute difference in the output for a point $\bold{x}$ exceeds an arbitrary threshold ($|f(\bold{x})-f(\bold{x}_0)|>\tau$), and $0$ otherwise. Assume there is zero probability this event occurs for sufficiently small steps: $\delta<a(\tau)$, for some value $a(\tau)$. Let us further assume that our confidence that $|f(\bold{x})-f(\bold{x}_0)|>\tau$ increases linearly after the value $\delta=a(\tau)$, until the maximum confidence level is reached at $\delta = b$. After some value $\delta=c$, we decide not to make any further assumptions about this event, so we attribute zero probability from that point on. This can be modeled as:
\begin{equation}
p_T(t=1|\delta) \,=\, \begin{cases}\frac{2}{v}\frac{\delta-a(\tau)}{u} \,,\,\; a(\tau)<\delta\leq b \\
\frac{2}{v}\,,\,\;\,b<\delta\leq c\\
0\,,\,\; \text{otherwise}
\end{cases} ,
\end{equation}
where $v=2c-a(\tau)-b\,,\,u=b-a(\tau)$ and $T=1$, if $|f(\bold{x})-f(\bold{x}_0|>\tau$ and $0$ otherwise.
The distribution of interest is then $p_S=p(f=1 \cap t=1 |\delta)$. To simplify the calculations, we impose the uncertainty about our approximation (expressed by $F$) and the likelihood of a sufficiently large output difference (expressed by $T$) to be independent given $\delta$: $p(f=1 \cap t=1 |\delta) = p(t=1|\delta)p(f=1|\delta)$, and since the goal is to sample steps from this distribution, we will divide it by the normalization constant: $Z=p(f=1 \cap t=1)=\int_{-\infty}^{+\infty}p(f=1 \cap t=1 |\delta)d\delta$. See Figure ~\ref{fig:descentsampdist} for an illustration of the method.
\begin{figure}[htbp!]
	\centering
	\includegraphics[width=0.8\linewidth]{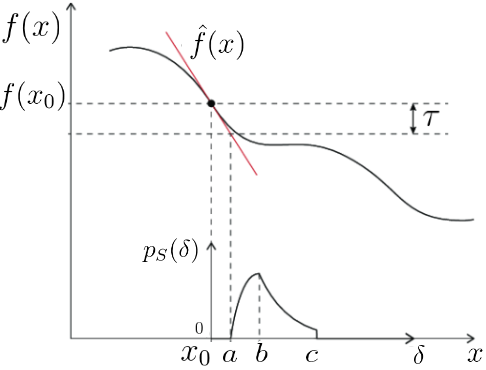}
	\caption{Illustration of the even descent sampling. $\hat f(x)$ approximates the function $f(x)$ and an estimation of how much $\delta=|x-x_0|$ is required to achieve $|f(x)-f(x_0)|\leq\tau$, is computed. Then a sample of $x$ is drawn according to $p_S=p(f=1\cap t=1|\delta)$}
	\label{fig:descentsampdist}
\end{figure}
\par There are a couple of properties that can be manipulated for a successful sampling of the output space:
\subsubsection*{Controlled Termination}
\par To force the algorithm to terminate after a minimum number of samples $M_{min}$ have been sampled, one can decrease the value of $a(\tau)$ with each iteration so that it becomes increasingly more likely that a value of $\delta$ will be picked such that $|f(\bold{x})-f(\bold{x}_0)|<\tau$, terminating the routine. For this purpose, one can compute the estimated threshold value $\tau_0$ that will keep the routine running.
\begin{equation}
|\hat f(\bold{x}) - f(\bold{x}_0)| \geq \tau \Leftrightarrow \sum_{i=1}^N\nabla_\bold{x} f(\bold{x}_0)[i]\delta[i]\geq \tau,
\end{equation}
where $N$ is the number of features. This is an underdetermined equation, but one possible trivial solution is to set:
\begin{equation}\label{eq:tau0}
\delta[i]\geq\frac{\tau}{N'\,\;\nabla_\bold{x} f(\bold{x}_0)[i]}\equiv \tau_0,
\end{equation} %Think about N' (probably incorrect)
where $N'$ is the number of non-zero gradient values,
then let $a(\tau)$ decay with time so that it will reach this limit value after $M_{min}$ iterations:
\begin{equation}
a(\tau)_i = \tau_0\left( 1 + \frac{\theta_a(M_{min}-i)}{M_{min}}\right ).
\end{equation}
\subsubsection*{Escaping Local Extrema}
\par To make it more likely to escape local extrema, one possibility is to set the cut-off value $c$ larger when the norm of $\tau_0$ (eq.~\ref{eq:tau0}) is larger than its expected value, and smaller otherwise:
\begin{equation}
c=b\left(c_l+\frac{\mathbf{E}\left [||\tau_0||_2\right ]-||\tau_0||_2}{\mathbf{E}\left [||\tau_0||_2\right ]+||\tau_0||_2}\right)
\,\,,\,\,c_l\in\,\left]2, +\infty\right[.
\end{equation}
\par This formulation allows jumping out from zones where the gradient is locally small, while taking smaller steps where the gradient is larger than expected.
\subsubsection*{Termination When Too Far from Original Sample} \par Since we are trying to explain the model locally, the sampling should terminate when the algorithm is exploring too far from the original sample. For that purpose, one can set $\lambda$ to increase with increasing distance $d$ to the original sample, pushing the probability density towards the left: $\lambda(d)=e^{-\frac{d}{\sigma^2}}$.\par
Putting all of the above design considerations together, you can find the complete routine in
algorithm ~\ref{even_desc}.

\begin{algorithm}[htbp!]
	\caption{Even Descent algorithm}\label{even_desc}
		\textbf{Input}: $f, \bold{x}_0, \bold{A}$\\
	\textbf{Parameter}: $\tau, \lambda, \theta_a, b, c_l, M_{min}$\\
	\textbf{Output}: $\bold{X}', \bold{f}$ 
	\begin{algorithmic}[1]
		\STATE $i \gets 0$, $f_i \gets f(\bold{x}_0)$, $\bold{f} \gets [f_i]$, converged $\gets$ False
		\STATE $\mathbf{E}[||\tau_0||] = 0$, $\bold{X}' \gets [\bold{x}_0]$
		\WHILE{converged $\not=$ True}
		\STATE $i\gets i+1$
		\STATE $\nabla f\gets$ ComputePartialDers($\bold{x}_0, \bold{A}, f$)
		
		\STATE $\tau_0 \gets \tau/|N' * \nabla f|$ %Think about N' (probably incorrect)
		\STATE $a, b, c \gets$ UpdatepS$\left (i, \theta_a, M_{min}, \mathbf{E}[||\tau_0||], c_l \right )$
		%		\STATE $a \gets \tau_0 \left(1 +\frac{\theta_a(M_{min}-i)}{M_{min}}\right) $\Comment Decaying $a$
	%	\STATE $||\tau_0|| \gets (\sum_j\tau_0^2[j])^\frac{1}{2}$
		%		\STATE $c \gets b  \left(c_l + \frac{\mathbf{E}[||\tau_0||] - ||\tau_0||} { \mathbf{E}[||\tau_0||] + ||\tau_0||}\right)$
		\STATE $\mathbf{E}[||\tau_0||] \gets (\mathbf{E}[||\tau_0||]  (i - 1) + ||\tau_0||) / i$
		\STATE $\delta \gets $ EvenSample($\lambda, a, b, c$)
		\STATE $\bold{x}_i \gets \bold{x}_i \pm \,\, \delta * \nabla f$
		\STATE Append$\left (\bold{f}, f(\bold{x}_i)\right )$, Append$\left( \bold{X}', \bold{x}_i\right)$ 	
		\IF {$|f_i - f_{i-1}| < \tau$}
		\STATE converged $\gets$ True
		\ENDIF
		\ENDWHILE 
		\STATE \textbf{return} $\bold{X}', \bold{f}$
	\end{algorithmic}
\end{algorithm}

\section{Experiments}
\subsection{Materials}
For all our analysis, we used plasma protein levels of patients with suspected coronary artery disease who were diagnosed for the presence of ischaemia \cite{michiel}. A total of 332 protein levels were measured using proximity extension arrays \cite{pea}, and of the 196 patients, 108 were diagnosed with ischaemia. The protein-protein interactions data is available for download at StringDB \cite{string_db}. We implemented the GSE and the random walk kernel in python and used sci-kit learn implementation \cite{scikit-learn} for the other algorithms in the comparison.
\subsection{Ischaemia Classification Performance}\label{benchmark}
	We benchmarked the GSE performance and running time when predicting ischaemia against the random-walk graph kernel, RBF, and random forests. Additionally, in order to test the hypothesis that the protein-interaction information is improving the analysis, we also tested GSE using a constant matrix full of ones as the interaction matrix. For this benchmark, we performed a 10-cycle stratified shuffle cross-validation split
%	 \cite{cv}
on the normalized protein data and recorded the average ROC area under the curve (AUC). To speed up the analysis, we used a training set of 90 pre-selected proteins using univariate feature selection with the F-statistic \cite{f_statistic}. The results are shown in table \ref{performance_table}. 
\begin{table}[htbp!]
\centering
\begin{tabular}{llll}
\hline
\textbf{Method}     &\textbf{AUC std} &\textbf{AUC avg} & \textbf{Run time avg(s)}\\
\hline
\textbf{GSE}&      0.055890 & \textbf{0.814141}   &   7.63\\
\textbf{RWGK} &  0.051704 & 0.808838 &1720\\
\textbf{RF}  &   0.066036 & 0.764141&         17.99\\
\textbf{GSE*} & 0.082309&  0.787879  &        6.59 \\
\textbf{RBF}&  0.095247 & 0.779293  &        1.16\\
\hline
\end{tabular}
\caption{The GSE benchmark against random-walk graph kernel (RWGK), random forests (RF), the GSE with constant interaction matrix (GSE*), and radial basis function (RBF). For all kernels, SVM was used as the learning algorithm.}
\label{performance_table}
\end{table}
The GSE outperformed all the other compared methods, and the fact that the GSE with a constant matrix (GSE*) had a lower performance increases our confidence that the prior interaction knowledge is beneficial for the analysis. The GSE is also considerably faster than the Random-Walk kernel, as expected. To test how both scale increasing feature size, we compared the running time of both for different pre-selected numbers of proteins. The results are depicted in Figure \ref{time_fig}.
\begin{figure}[htbp!]
	\centering
	\includegraphics[width=0.7\linewidth]{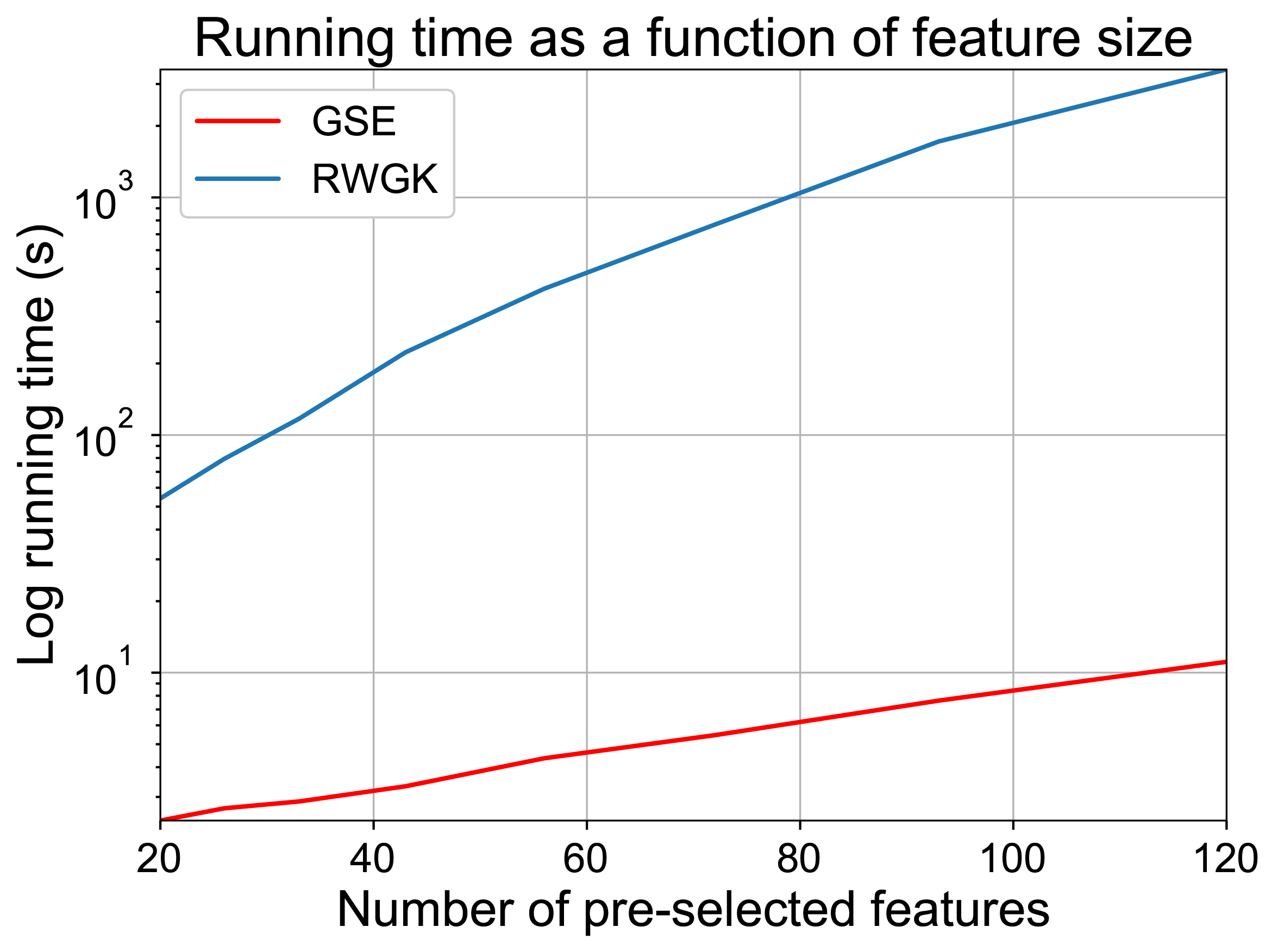}
	\caption{Average running time of the GSE and the Random-walk graph kernel (RWGK), per number of pre-selected features}
	\label{time_fig}
\end{figure}

\subsection{Performance for Different $\nu$ Values}
Recall from section ~\ref{var_region} that a feasible operating region for the $\nu$ values in the GSE kernel was analytically determined. 
We wanted to investigate how the loss function performs within this region, and whether it is possible to draw conclusions regarding the GSE kernel behaviour with respect to the interactions. 
To test this, the $\nu^{\ast}=\max_{\nu} \sigma^2[k(\nu)]$ was found using a gradient descent (ADAM \cite{adam}) on the training set over 20 stratified shuffle splits (same preprocessing as in ~\ref{benchmark}). 
\begin{figure}[htbp!]
	\centering
	\includegraphics[width=0.9\linewidth]{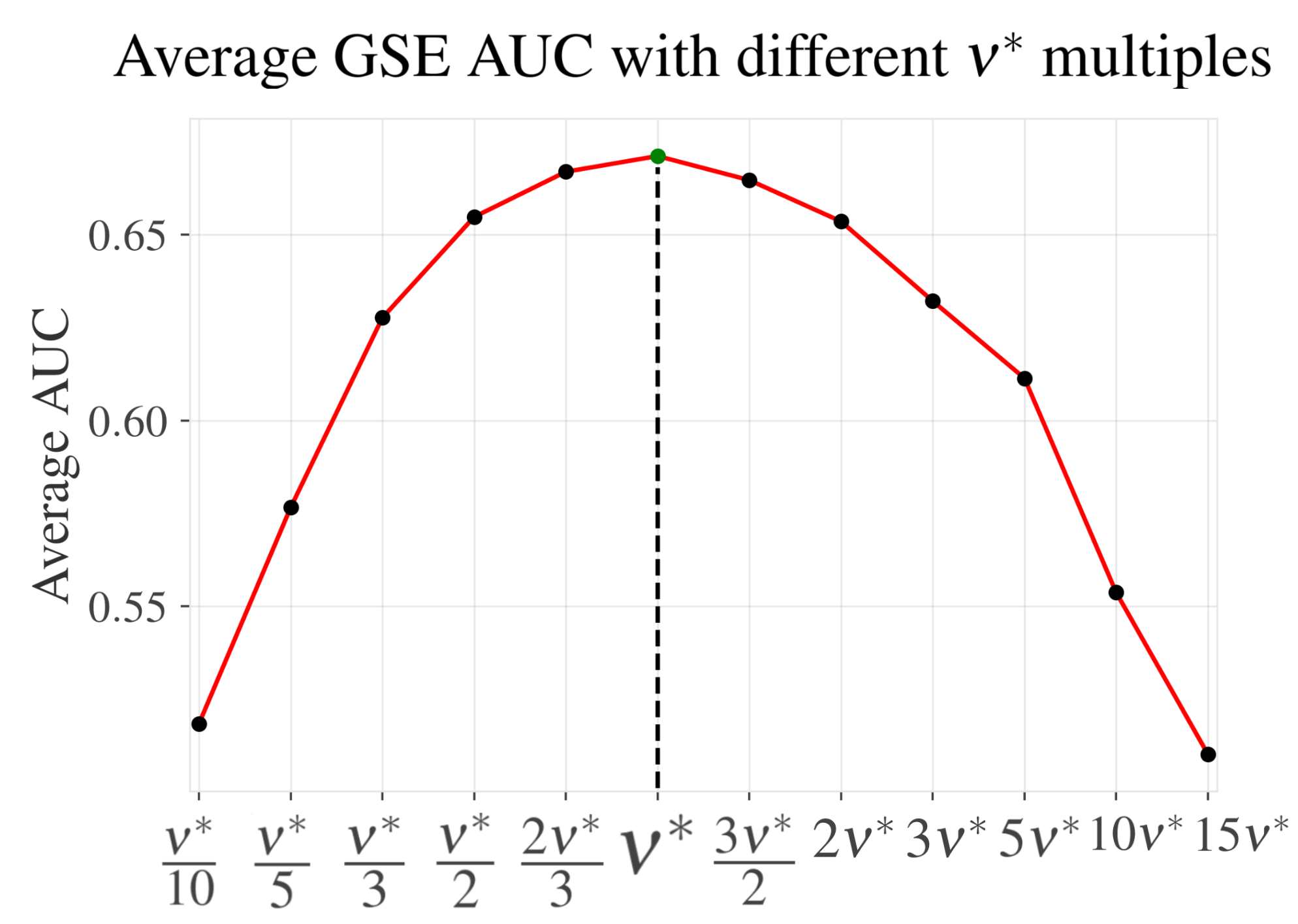}
	\caption{Average ROC AUC on \underline{validation} set using GSE with different $\nu$ values over 20 stratified shuffle splits. Horizontal axis - Multiples of $ \max_{\nu} \sigma^2[k(\nu)]$ here denoted by $\nu^{\ast}$. The AUC as function of the $\nu$ values looks convex and peaks exactly at $\nu^{\ast}$}
	\label{nu_max}
\end{figure}
We then measured the ROC AUC on the validation set using 12 multiples of $\nu^{\ast}$. The results can be seen in Figure ~\ref{nu_max}. 
It is quite interesting that our proxy for measuring kernel expressiveness turns out to be a convex function peaking at $\nu^{\ast}$.

\subsection{Interpretability Test}
To test how interpretable our model's predictions are, first we trained the model on a random subset of our data and used the trained model to predict the rest of the data. Then we employed the method described in section ~\ref{LIME} on a random patient in the test set, using decision trees as the interpretable models $h(\bold{G}) \in \mathcal{H}$, and a linear weighted combination of \textit{max depth} and  \textit{min samples per split} as the complexity penalization term $\Omega(h)$. We then picked the two most important features and made a 3d plot using an interpolation of the prediction space.
The result is depicted in Figure ~\ref{global_lime_fig}.
\begin{figure}[tp!]
	\centering
	\includegraphics[width=1\linewidth]{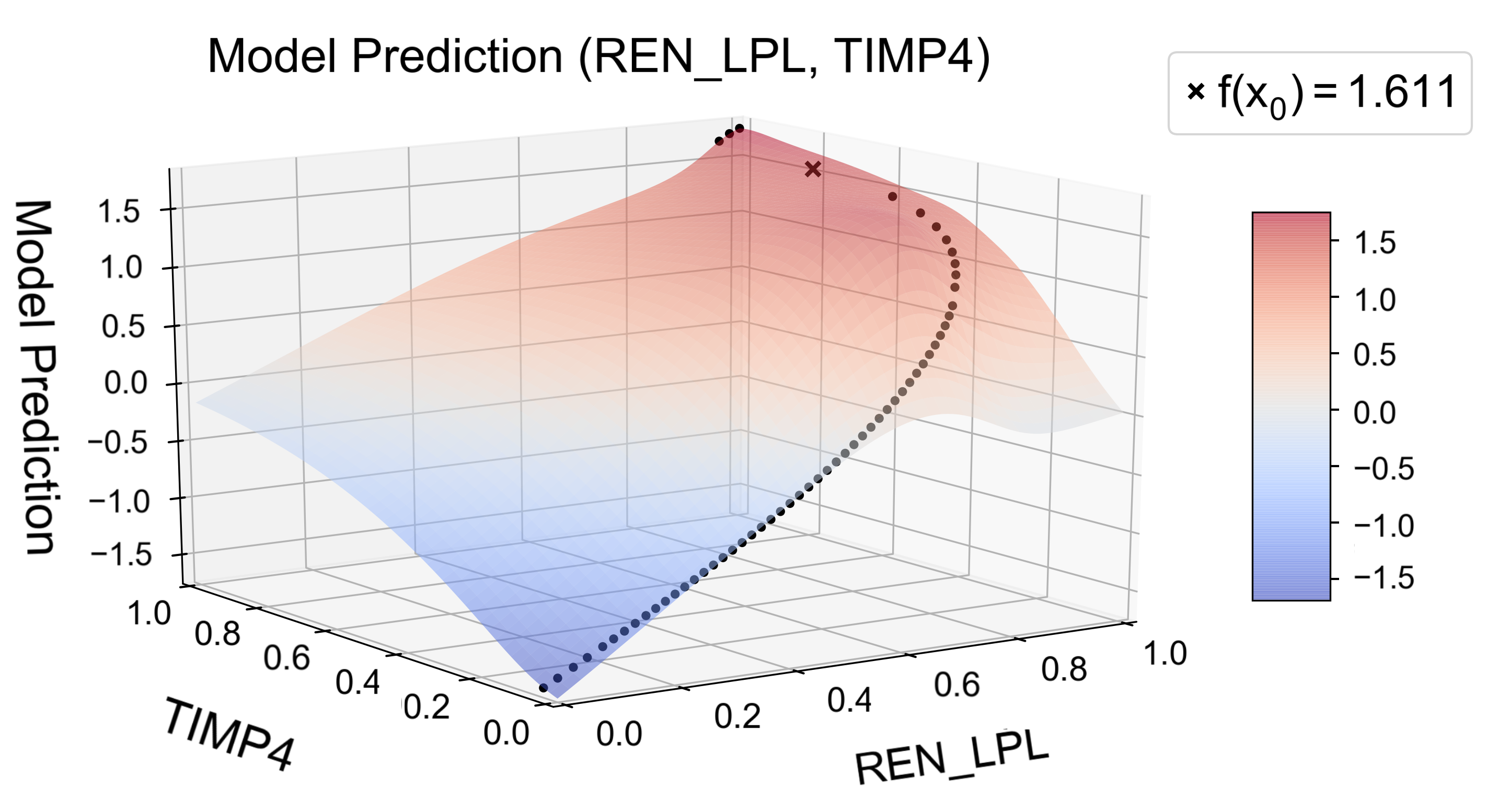}
	\caption{Even Descent Sampling for a random patient in our dataset. This analysis reveals our model "predicts" this patient could be treated by lowering protein "TIMP4" and the interaction between "REN" and "LPL".}
	\label{global_lime_fig}
\end{figure}
\par The Even Descent Sampling tests instances which are approximately equidistant in the output values. For this patient, our model 'predicts' its ischaemia risk could be mitigated by lowering protein TIMP metallopeptidase inhibitor 4 ("TIMP4") and the interaction between
lipoprotein lipase ("LPL") and renin ("REN").

\section{Conclusions}
In this paper, we address the problem of analyzing interconnected systems and leveraging the often-known information about how the components interact. To tackle this task, we developed the \textit{Graph Space Embedding} algorithm and compared it to other established methods using a dataset of proteins and their interactions from a clinical cohort to predict ischaemia. The GSE results outperformed the other algorithms in running time and average AUC. 
Moreover, we presented an optimal regime for the GSE in terms of a feasibility region for its parameters, which vastly decreases the optimization time. Finally, we developed a new technique for interpreting black-box models' decisions, thus making it possible to inspect which features and/or interactions are the most relevant.

\bibliographystyle{named}
\bibliography{ijcai19}

\end{document}